\newcommand\scalemath[2]{\scalebox{#1}{\mbox{\ensuremath{\displaystyle #2}}}}
\newtheorem{theorem}{Theorem}
\crefname{section}{Sec.}{Secs.}
\Crefname{section}{Section}{Sections}
\Crefname{table}{Table}{Tables}
\crefname{table}{Tab.}{Tabs.}
\begin{document}

%%%%%%%%% TITLE - PLEASE UPDATE
\title{Shadows Aren’t So Dangerous After All: A Fast and Robust Defense Against Shadow-Based Adversarial Attacks}

\author{Andrew Wang\\
Cornell University\\
{\tt\small aw632@cornell.edu}
% For a paper whose authors are all at the same institution,
% omit the following lines up until the closing ``}''.
% Additional authors and addresses can be added with ``\and'',
% just like the second author.
% To save space, use either the email address or home page, not both
\and
Wyatt Mayor\\
Monmouth College\\
{\tt\small wmayor@monmouthcollege.edu}
\and
Ryan Smith \\
University of Arizona \\
{\tt\small ryansmith1@arizona.edu} 
\and 
Gopal Nookula \\
U.C. Riverside \\
{\tt\small gnook001@ucr.edu}
\and 
Gregory Ditzler \\
Rowan University \\
{\tt\small ditzler@rowan.edu}}
\maketitle

%%%%%%%%% ABSTRACT
\begin{abstract}
   Robust classification is essential in tasks like autonomous vehicle sign recognition, where the downsides of misclassification can be grave. Adversarial attacks threaten the robustness of neural network classifiers, causing them to consistently and confidently misidentify road signs. One such class of attack, shadow-based attacks, causes misidentifications by applying a natural-looking shadow to input images, resulting in road signs that appear natural to a human observer but confusing for these classifiers. Current defenses against such attacks use a simple adversarial training procedure to achieve a rather low 25\% and 40\% robustness on the GTSRB and LISA test sets, respectively. In this paper, we propose a robust, fast, and generalizable method, designed to defend against shadow attacks in the context of road sign recognition, that augments source images with binary adaptive threshold and edge maps. We empirically show its robustness against shadow attacks, and reformulate the problem to show its similarity $\varepsilon$ perturbation-based attacks. Experimental results show that our edge defense results in 78\% robustness while maintaining 98\% benign test accuracy on the GTSRB test set, with similar results from our threshold defense. 
%   Moreover, our defense remains robust against classic attacks, such as A (AA\%), B (~BB\%), and C (~CC\%).
   \footnote{Our code is available at \url{https://github.com/aw632/ShadowDefense}.}
\end{abstract}

%%%%%%%%% BODY TEXT
\section{Introduction}
\label{sec:intro}

With the great success of neural networks in image classification has come the great vulnerability of adversarial examples---a class of examples designed to exploit the brittle nature of deep neural networks and fool models into making incorrect classifications by making small, human-imperceptible changes to the image. When these adversarial examples appear in mission-critical settings such as autonomous driving\cite{mahima2021adversarial}, medical imaging\cite{hirano2021universal}, and financial management\cite{Goldblum_2021}, the effects can be disastrous. The importance of defending against such examples and adversarial attacks has therefore spawned countermeasures, which have in turn spawned more advanced attacks, leading to a sort of adversarial arms race \cite{inevitable_2018}.
One recent adversarial attack, proposed by Zhong \etal \cite{zhong2022shadows}, involves darkening a section of the input image (``shadowing"), thereby causing misclassifications. Not only is this attack extremely effective against SOTA sign recognition models, achieving 90\% and 98\% attack success rates on the GTSRB and LISA benchmark datasets respectively, it is also realistic, requiring little to no specialized equipment and is easily unnoticeable by human drivers. 

We propose a new defense (see \cref{fig:overveiw}) against this attack based on adaptive threshold and edge maps that---even with no hyperparameter optimization---achieves 78\% robustness by trading off only roughly 1\% benign test accuracy. In specific, we:
\begin{itemize}
    \item motivate the use of adaptive threshold and edge maps with saliency maps, 
    \item demonstrate robustness, and effectiveness of our defense against shadow-based adversarial attacks,
    \item show its generalizability against classic gradient-based and non-gradient-based adversarial attacks,
    \item reformulate the shadows attack as an instance of $\varepsilon$ perturbations within an $\ell_{\infty}$ ball and construct a certifiable defense against shadow attacks,
    \item and compare our defense against other optics-based adversarial attacks.
\end{itemize}

\begin{figure*}
  \centering
  \includegraphics[width=0.8\linewidth]{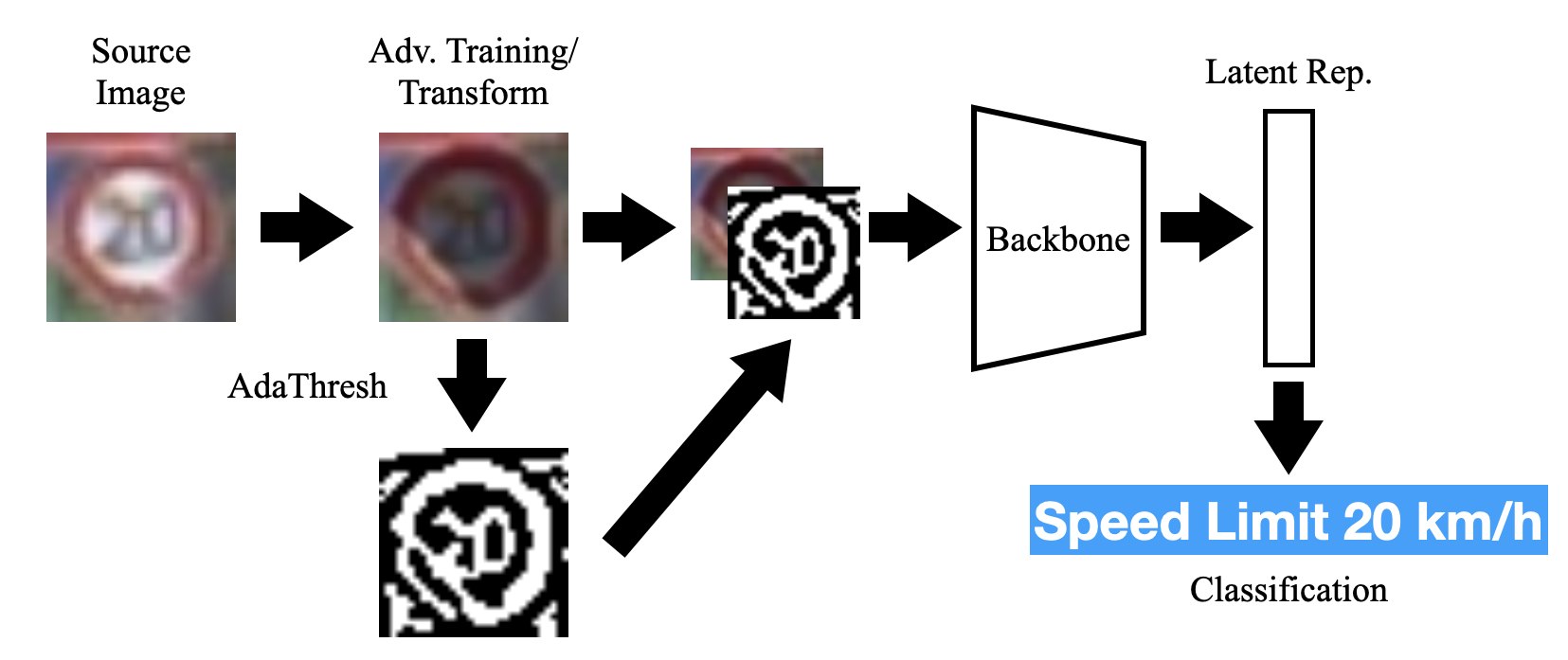}

   \caption{An overview of our defense. We take source images, apply a shadow with parameter $k$ randomly to the image, transform the image with shear, rotation, and translation, and use adaptive thresholding (``AdaThresh") to generate a binary threshold map. The transformed image and the threshold map are concatenated to form a 4-channel image, which the model is retrained on. The rest of the model architecture (``backbone") need not be changed for good robustness. }
   \label{fig:overveiw}
\end{figure*}

%-------------------------------------------------------------------------
\section{Background and Related Work}

Adversarial examples from a general perspective were first introduced by Dalvi \etal \cite{dalvi2004adversarial}, and refined by Szegedy \etal \cite{szegedy2013intriguing} with a constraint that the adversarial examples must be no more than $\varepsilon$ away from the clean example, thereby causing examples to be impercetible to humans for sufficiently small $\varepsilon$.

\subsection{So What Are Adversarial Examples Exactly?}

In this work, we consider a classification setting with $C$ distinct classes, in which we wish to classify an RGB image $x \in \mathbb{R}^n$ by passing $x$ through a model $M$ (such as a neural network) and computing a class label $y = M(x) \in \{1, \ldots, C\}$. 

This setting includes an adversary, who is able to take any such example $x$ and transform it into $x_{\text{adv}}$, a point in the convex set of possible transformed examples $S_{\text{adv}}(x) \subseteq \mathbb{R}^n$. Intuitively, $S_{\text{adv}}(x)$ is a collection of all the examples that, by some metric, are ``close" to $x$ \cite{balunovic2020adversarial}. For instance, if the adversary wishes to keep the transformations within a $\varepsilon$ range $L_p$ perturbation from $x$, then $S_{\text{adv}}(x) = \{x' : ||x' - x||_{p} < \varepsilon\}$.

An adversarial example $x_{\text{adv}}$, then, is an example that is ``close" to $x$ yet causes the model $M$ to misclassify it:
\begin{equation}
    M(x) \neq M(x_{\text{adv}}) \quad x_{\text{adv}} \in S_{\text{adv}}(x).
\end{equation}

Extending this notion, an adversarial attack is a principled manner for an adversary to generate these adversarial examples, \eg, through some mathematical formulation or algorithm.

\subsection{Shadow Attack on Road Signs}

Shadows as an adversarial attack were first proposed by Zhong \etal \cite{zhong2022shadows}, although concern over shadows in road sign recognition have existed for some time \cite{fleyeh2006shadow, li2015novel}. Their attack generates an adversarial image by
\begin{enumerate}
    \item choosing a parameter $k$ representing the ``darkness" or ``strength" of the shadow, where higher values of $k$ indicate weaker shadows and vice versa,
    \item locating a polygon $\mathcal{P}_{\mathcal{V}}$, defined by a set of vertices $\mathcal{V} = \{(m_1, n_1), \ldots, (m_s, n_s)\}$, and a mask $\mathcal{M}$ to locate the target polygon,
    \item converting $x$ from RGB color space to LAB image space, such that each element in $x_{i, j} \in \mathbb{R}^3$ represents the L, A, and B channels respectively,
    \item forming a new image $x_{\text{adv}}$ by recalculating the value of every pixel $(i, j)$ with
    \begin{equation}
        x_{\text{adv},i,j} = \begin{cases}
            x_{i, j} \cdot \scalemath{0.75} {\begin{bmatrix}k & 1 & 1\end{bmatrix}^\top} & (i, j) \in \mathcal{P}_{\mathcal{V}} \cap \mathcal{M} \\
            x_{i, j} \cdot \scalemath{0.75}{\begin{bmatrix}1 & 1 & 1\end{bmatrix}^\top} & (i, j) \not\in \mathcal{P}_{\mathcal{V}} \cap \mathcal{M} \\
        \end{cases},
    \end{equation}
    \item and converting $x_{\text{adv}}$ back into RGB space \cite{zhong2022shadows}.
\end{enumerate}

Finding $\mathcal{V}$ can be formulated as an optimization problem, which Zhong \etal solve by means of Particle Swarm Optimization (PSO) \cite{kennedy1995particle}.

\subsection{Adaptive Thresholding}
In general, \textbf{thresholding} is the process of generating a binary image $b$ (a.k.a. the \textbf{threshold map}) from a source image $s$, where the white pixels are the ``foreground" (255) elements and the black pixels are the ``background" (0) elements. The ``threshold" is the means by which foreground and background are separated. \textbf{Adaptive thresholding} \cite{bradley2007adaptive, white1983image} is a form of local thresholding, where each pixel $b_{i, j}$ is assigned to the foreground or the background by a threshold function $T$ parameterized on pixel coordinates $i, j$:
\begin{equation}
    b_{i, j} = 
    \begin{cases}
        255 & s_{i, j} > T(i, j) \\
        0 & s_{i, j} \leq T(i, j).
    \end{cases}
    \label{eq:adathresh}
\end{equation}
The benefit of adaptive thresholding over global thresholding is that $T$ is parameterized on each pixel, and is thus robust to the spatial changes in the illuminations that represent the variations present in shadow attacks, as opposed to setting one uniform threshold for the entire image.

In our approach, we let $T$ be a Gaussian-window weighted sum of a $k \times k$ neighborhood around $(i, j)$ \cite{SASPWEB2011, stephane1999wavelet}. If $N(i, j)$ contains the set of all points within a $k \times k$ neighborhood around $(i, j)$, then 
\begin{equation}
    T(i, j) = \sum_{(x, y) \in N(i, j)} G_{x, y} \cdot s_{x, y}
\end{equation}
where $G_{i, j}$ is the Gaussian-window (\cref{fig:gaussian}) weight for pixel $(i, j)$, defined as
\begin{equation}
    G_{i, j} = \alpha \cdot \exp\left(\frac{-(i - \frac{k - 1}{2})^2 -(j - \frac{k - 1}{2})^2}{2 \sigma^2}\right)
    \label{eq:gaussian}
\end{equation}
where the standard deviation $\sigma$ is calculated from all pixels in $N(i, j)$ and $\alpha$ is a scaling factor such that the $G_{i, j}$ sum to 1.
$k$ is a tuneable hyperparameter representing the ``aperture" of our local threshold; we chose $k = 3$ to due to the small size of our input images, but the primary focus is on presenting the robustness of the method and not tuning hyperparameters.  

\begin{figure}[t]
  \centering
%   \fbox{\rule{0pt}{2in} \rule{0.9\linewidth}{0pt}}
   \includegraphics[width=0.8\linewidth]{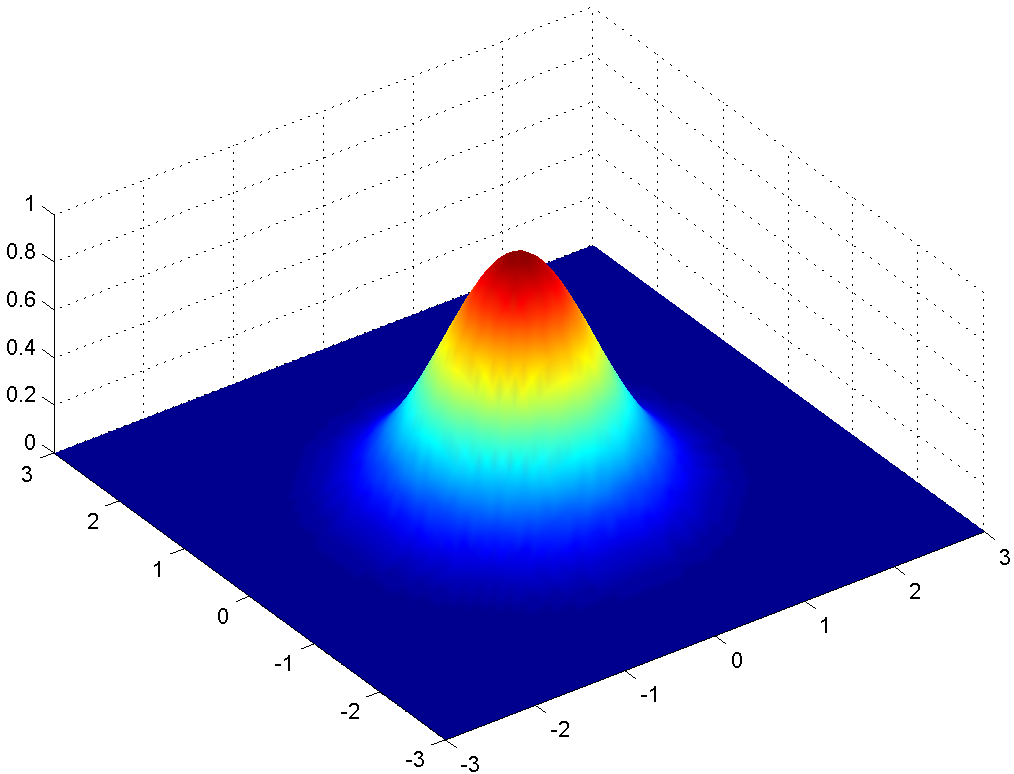}

   \caption{A visualization of $G_{i, j}$ around the point $(i, j) = (0, 0)$ with window size $k = 3$. The motivation behind using Gaussian window weights is that pixels closest to $(i, j)$ receive maximum weights, while pixels further away receive less weight. Image licensed under Creative Commons Attribution-Share Alike 3.0 \cite{gaussian}.}
   \label{fig:gaussian}
\end{figure}

\subsection{Canny Edge Detection}
\label{sec:cannyedge}
Closely related to local thresholding is Canny edge detection, first proposed in 1986 \cite{canny1986computational}. The Canny edge detection algorithm usually takes the following steps \cite{ding2001canny}:
\begin{enumerate}
    \item smooth the image with a Gaussian-window filter (see \cref{eq:gaussian}) to reduce noise,
    \item determine the gradient magnitude and direction at each pixel,
    \item apply a custom thresholding function wherein edge (``foreground") pixels have gradient magnitudes larger than those of its two neighbors in the gradient direction,
    \item and clean up extraneous or ``weak" edges with a hysteresis thresholding step with two threshold parameters, $t_{\text{hi}}$ and $t_{\text{lo}}$. Pixels that are above $t_{\text{hi}}$ become edges, below $t_{\text{lo}}$ are rejected as edges, and in between are edges only if they are connected to an edge pixel.
\end{enumerate}

The output of any edge algorithm \cite{kumar2013algorithm}, Canny included, is referred to as the \textbf{edge map} of the image.

Canny edge detection is both local and global: the first thresholding step is local with respect to the gradient direction, while the second thresholding step is global, as the parameters are set for the whole image. 

While both methods are tested in our approach, we hypothesize that for darker shadows, the greater variance between gradient pixels \textit{inside the shadowed region} and gradient pixels \textit{outside the shadowed region} would cause Canny edge maps to have worse performance than adaptive threshold maps. 
%-------------------------------------------------------------------------
\subsection{Related Work}

While shadows as an adversarial attack were first conceived by Zhong \etal \cite{zhong2022shadows}, shadows have been a source of concern in traffic sign recognition for some time \cite{de2003traffic, fleyeh2011eigen}. However, to our knowledge, the only defense against such attacks remains that proposed by Zhong \etal: a simple adversarial training scheme.

More generally, shadows as an adversarial attack falls into a class of non-invasive optical attacks which are particularly pernicious against self driving cars. While digital attacks like FGSM \cite{huang2017adversarial} and DeepFool \cite{moosavi2016deepfool} are effective in the online domain, such attacks are infeasible in realistic self-driving scenarios, leading to the rise of physical, non-invasive attacks that cause natural or imperceptible perturbations to physical road signs \cite{sayles2021invisible, li2019adversarial, gnanasambandam2021optical, song2018physical}. Zhong \etal noted these attacks relied on sophisticated and complex equipment, making their implementation impractical---for this same reason, it is impractical for us to test our defense on these attacks.

To our knowledge, there is no literature regarding the use of adaptive threshold maps or threshold maps as adversarial defenses. However, edge detection is closely related to thresholding \cite{nadernejad2008edge}. Previous work has investigated edges as an adversarial defense against different classification tasks \cite{ding2019sensitivity, sun2021can}, but their work both use edges as the sole source of information, rather than augmenting it to existing images as in our method. Doing so puts heavily reliance on the quality of edge detectors, which can introduce their own vulnerabilities to adversarial examples if neural networks are used \cite{cosgrove2020adversarial}. 
%-------------------------------------------------------------------------

% Update the cvpr.cls to do the following automatically.
% For this citation style, keep multiple citations in numerical (not
% chronological) order, so prefer \cite{Alpher03,Alpher02,Authors14} to
% \cite{Alpher02,Alpher03,Authors14}.

% \begin{figure*}
%   \centering
%   \begin{subfigure}{0.68\linewidth}
%     \fbox{\rule{0pt}{2in} \rule{.9\linewidth}{0pt}}
%     \caption{An example of a subfigure.}
%     \label{fig:short-a}
%   \end{subfigure}
%   \hfill
%   \begin{subfigure}{0.28\linewidth}
%     \fbox{\rule{0pt}{2in} \rule{.9\linewidth}{0pt}}
%     \caption{Another example of a subfigure.}
%     \label{fig:short-b}
%   \end{subfigure}
%   \caption{Example of a short caption, which should be centered.}
%   \label{fig:short}
% \end{figure*}

%------------------------------------------------------------------------
\section{Our Approach}
\label{sec:ourapproach}
In this section, we give motivations for our defense and empirical justification for those motivations. We then describe our defense and training regime. Our framework supports two defenses: one with adaptive threshold maps and one with edge maps, and requires no more than one modification to the network architecture to achieve good results against shadow attacks.
\subsection{Our Motivation}
We were initially motivated to investigate edge profiling as a possible defense, since
\begin{itemize}
    \item humans recognize road signs based on the boundaries of the sign and the text or symbols within, which can be represented with an edge map \cite{bansal2013edges}, and
    \item convolutional neural networks, which are often used in state-of-the-art (SOTA) road sign recognition, can learn ``unimportant" (from a human perspective) and non-robust features like texture \cite{geirhos2018imagenet}, leaving them vulnerable to adversarial attacks which disturb these features.
\end{itemize}
However, edge detection algorithms are imperfect and edge maps alone may be insufficient for object recognition \cite{sanocki1998edges}. With this in mind, we decided to proceed with appending edge maps of source images as a fourth channel as a way of ``emphasizing" the importance of edges, without destroying the information provided by the source image.

This intuition is empirically justified by experiments with saliency maps \cite{simonyan2013deep, shrikumar2017learning}, a sample of which are presented in \cref{fig:saliency}. These experiments indicate that SOTA sign recognition models, such as those found in Eykholt \etal \cite{eykholt2018robust}, learn features inside the sign but not necessarily the text or boundaries of the sign itself---likely because most signs share one of a few simple edge boundaries, and thus edge maps are shared across many classes in the training set, so the model fails to recognize its robustness. Although sufficient in a benign setting, when an adversary perturbs the sign contents sufficiently, the model fails to ``fallback" and learn from the edges or other low-level features, like the sign's edge boundaries.

The motivation behind thresholding was based on our hypothesis in \cref{sec:cannyedge} and empirical tests that threshold maps appeared more readable and less noisy than edge maps (\cref{fig:threshvsedge}). However, for completeness, we tested both methods. 

\begin{figure}[t]
  \centering
%   \fbox{\rule{0pt}{2in} \rule{0.9\linewidth}{0pt}}
   \includegraphics[width=0.8\linewidth]{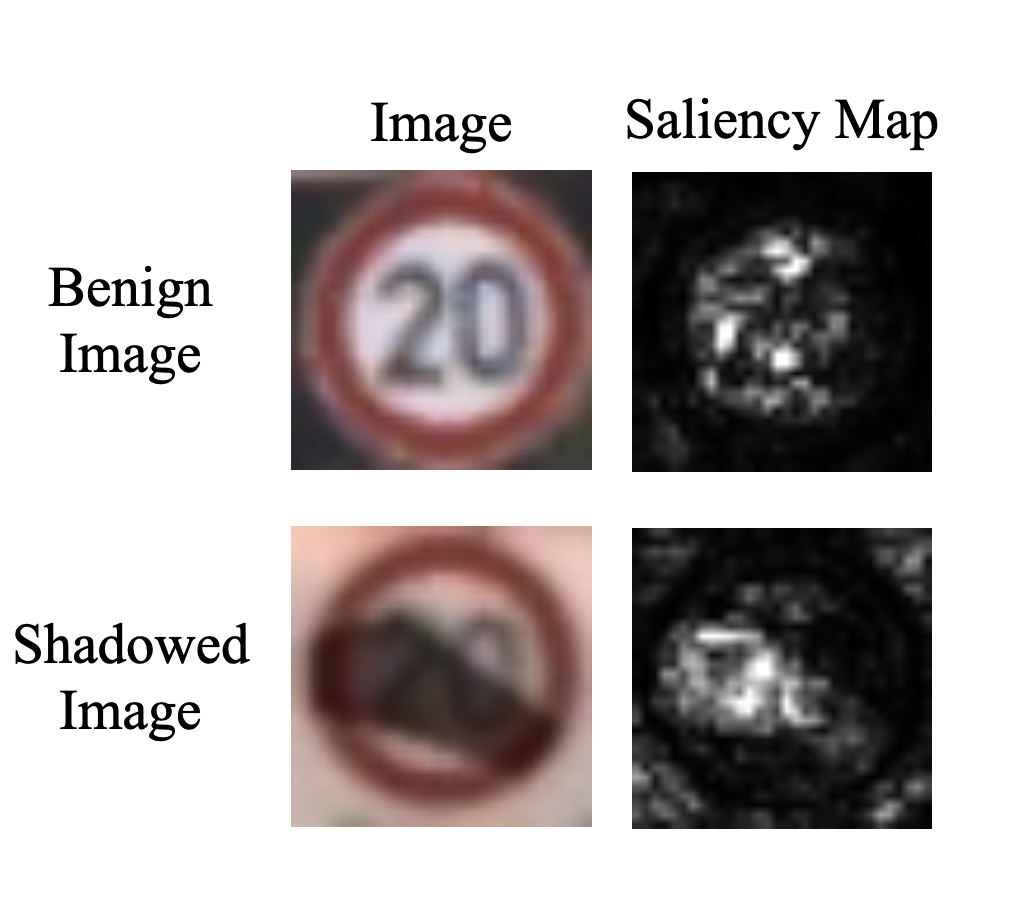}

   \caption{Saliency maps \cite{shrikumar2017learning, uozbulak_pytorch_vis_2022} (right) for a randomly selected benign traffic sign (top left) and its adversarially-shadowed counterparts (bottom left) from the GTSRB \cite{gtsrb} dataset on the model from \cite{eykholt2018robust}. The network appears to learn features inside the sign boundary, leaving it vulnerable when the inside of the sign is perturbed heavily.} 
   \label{fig:saliency}
\end{figure}

\begin{figure}[t]
  \centering
%   \fbox{\rule{0pt}{2in} \rule{0.9\linewidth}{0pt}}
   \includegraphics[width=0.8\linewidth]{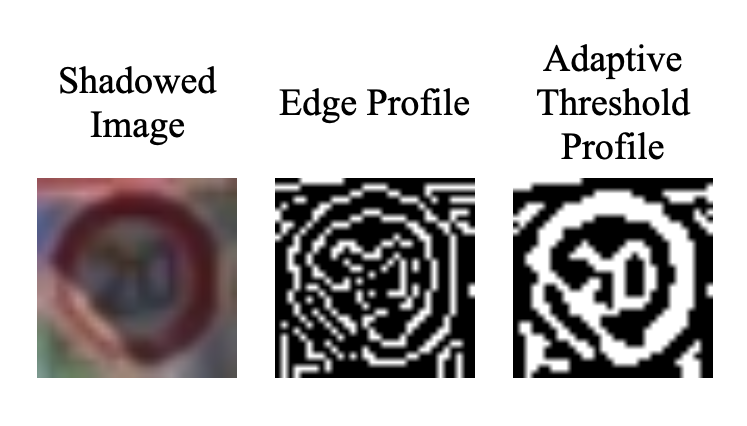}

   \caption{A comparison of a shadowed image (left) with parameter $k = 0.43$, its edge map (middle) and its threshold map (right). The threshold map appears much less noisy, motivating the use of threshold maps.} 
   \label{fig:threshvsedge}
\end{figure}

%-------------------------------------------------------------------------
\subsection{Our Defense}
Our defense is a variant on adversarial training. Given a training dataset $D_{\text{train}}$ and a model $M$, our defense quadruplicates the dataset based on boolean flags \texttt{adv} and \texttt{transform}, which control the addition of adversarial examples and transformed images (shear, rotate, and translate) to the training regime. For each of these dataset duplicates, our defense
\begin{enumerate}
    \item modifies the first layer of $M$ to accomodate a 4-channel image,
    \item takes in a source image $s$,
    \item randomly applies a shadow if \texttt{adv} is \texttt{True},
    \item generates the relevant profile $p$ (threshold or edge map),
    \item appends the profile $p$ to $s$ as a fourth channel, making a new image $x$,
    \item transforms $x$ if \texttt{transform} is \texttt{True}, and
    \item trains the model on $x$. 
\end{enumerate}

If augmenting source images with threshold maps, then we generate an threshold map using \cref{eq:adathresh}. In the case of edge maps, our defense uses Canny edge detection due to its non-differentiablity (compared to a neural network edge mechanism) and speed. The parameters $t_{\text{high}}$ and $t_{\text{low}}$ are chosen based on the formula
\begin{align}
    t_{\text{lo}} &= \max(0, \mu \cdot (1 - \sigma)) \\
    t_{\text{high}} &= \min(255, \mu \cdot (1 + \sigma)) \nonumber
\end{align}
where $\mu$ is the median of the image across all channels and $\sigma$ is is a parameter representing the ``blurring" of the edges: higher sigma indicates more extreme blurring. Intuitively, this formula takes $\sigma$ as a standard deviation and sets the upper and lower thresholds to be one standard deviation from the mean.

In our experimentation, we chose $\sigma = 0.33$ to maintain the 2:1 ratio of $t_{\text{high}}$ to $t_{\text{low}}$ recommended by Canny \cite{canny1986computational}. We also chose $\mu$ as the median and not the mean, since shadows bias the image histogram towards the darker side. 

%-------------------------------------------------------------------------

%------------------------------------------------------------------------
\section{Experimental Results}

In this section, we present our experiments, namely:
\begin{itemize}
    \item we present robustness results for models using edge maps and adaptive threshold maps, 
    \item we present benign test accuracies for both models,
    \item we reformulate the shadow attack with parameter $k$ as an instance of a $\varepsilon$ perturbations within an $\ell_p$ ball,
    \item using the above reformulation, we present robustness results for both models against the Fast Gradient Sign Method (FGSM), Projected Gradient Descent (PGD), Boundary Attack, and a number of other optics-based adversarial attacks, and
    \item we perform an ablation study on our defense.
\end{itemize}
\subsection{Experimental Setup}
For consistency, we used the same network architectures \cite{eykholt2018robust} and preprocessing steps as Zhong \etal \cite{zhong2022shadows}, with the exception that our networks were modified to take in 4 channels and retrained accordingly.

We also used the same train/test split and the same code as in Zhong \etal \cite{zhong2022shadows} for the shadow attack, and evaluated the robustness accordingly. This includes their exclusion of images which are already too dark; i.e., the mean of their pixel values in the L channel is no larger than 120.

Our experiments were conducted with the PyTorch deep learning library \cite{NEURIPS2019_9015}, on a machine with eight Intel(R) Xeon(R) Bronze 3106 CPU @ 1.70GHz CPUs and one NVIDIA Titan-XP GPU with CUDA version 10.2. The operating system was Ubuntu 18.04 (LTS) with Python version 3.10.5. 

\subsection{Core Robustness Results}
We used \cref{alg:one} as our testing regime, and tested the same $k$ values as Zhong \etal \cite{zhong2022shadows}; namely, the set of values 
\begin{align*}
  \mathcal{K} = \{\,
  &0.20, 0.25, 0.30, 0.35, 0.40, 0.43, 0.45, \\
  &0.45, 0.50, 0.55, 0.60, 0.65, 0.70\}.
\end{align*}
Note that the value $k = 0.43$ is the median shadow value from the SBU Shadow Dataset \cite{vicente2016large}. There are two results we record: the robustness, defined as $1 - \text{attack success rate}$, and the number of queries that the shadow attack makes to the backbone model, which is a measure of the attack's stealthiness. 
% We then ran \cref{alg:one} ten times to obtain a 95\% confidence interval on our results using the t-score \cite{student1908probable}. Note that due to the lengthy optimization step of the shadow attack, we were limited to a small amount of runs for each value of $k$.

Our results are presented in \cref{tab:core} and \cref{tab:core2} for robustness and average number of queries, respectively. In line with our hypotheses in \cref{sec:ourapproach}, we found that while both edge maps and adaptive thresholding provided similar robustness, the quality of edge maps depended greatly on the strength of the shadow, whereas adaptive threshold maps, due to their local nature, were more invariant. 

\begin{algorithm}
\caption{Our Testing Regime}\label{alg:one}
\begin{algorithmic}
\For{dataset in $\{\text{GTSRB, LISA}\}$}
\For{$k$ in $\mathcal{K}$}

\State $\alpha \gets$ test accuracy on benign examples
\State $\beta_1 \gets$ test accuracy on shadowed examples
\State $\beta_2 \gets$ num. queries on shadowed examples
% \State $\gamma_1 \gets$ test accuracy on shadowed ex. with EOT
% \State $\gamma_2 \gets$ num. queries on shadowed ex. with EOT
\EndFor
\EndFor
% \Require $n \geq 0$
% \Ensure $y = x^n$
% \State $y \gets 1$
% \State $X \gets x$
% \State $N \gets n$
% \While{$N \neq 0$}
% \If{$N$ is even}
%     \State $X \gets X \times X$
%     \State $N \gets \frac{N}{2}$  \Comment{This is a comment}
% \ElsIf{$N$ is odd}
%     \State $y \gets y \times X$
%     \State $N \gets N - 1$
% \EndIf
% \EndWhile
\end{algorithmic}
\end{algorithm}

\begin{table*}
  \centering
  \begin{tabular}{@{}lccccccccccccc@{}}
    \toprule
    Defense/Dataset & $k=$ 0.20 & 0.25 & 0.30 & 0.35 & 0.40 & \textbf{0.43} & 0.45 & 0.50 & 0.55 & 0.60 & 0.65 & 0.70 \\
    \midrule
    Zhong \etal: None & 2.63 & 3.65 & 4.86 & 6.55 & 8.64 & \textbf{9.53} & 11.03 & 12.85 & 15.75 & 19.64 & 26.24 & 33.27 \\
    Zhong \etal: AT & 15.89 & 17.78 & 20.39 & 23.12 & 26.69 & \textbf{28.38} & 30.03 & 30.26 & 33.78 & 43.01 & 48.57 & 55.30 \\
    % Zhong \etal: AT-LISA & AA\% & BB\% & CC\% & DD\% & EE\% & \textbf{FF\%} & GG\% & HH\% & II\% & JJ\% & KK\% & LL\% \\
    Ours: AdaThresh & 73.82 & 74.69 & 75.12 & 75.58 & 76.41& \textbf{76.63} & 76.29 & 76.71 & 77.61 & 79.65 & 79.26 & 78.57\\
    % Ours: AdaThresh-LISA & AA\% & BB\% & CC\% & DD\% & EE\% & \textbf{FF\%} & GG\% & HH\% & II\% & JJ\% & KK\% & LL\% \\
    Ours: Edges & 75.41 & 75.19 & 76.82 & 76.89 & 77.08 & \textbf{78.02} & 78.12 & 79.33 & 80.59 & 81.97 & 83.55 & 85.46 \\
    % Ours: Edges-LISA & AA\% & BB\% & CC\% & DD\% & EE\% & \textbf{FF\%} & GG\% & HH\% & II\% & JJ\% & KK\% & LL\% \\
    \bottomrule
  \end{tabular}
  \caption{The above table describes the average robustness ($1 - $ success rate of attack) over $n = 5$ trials for a specified defense (adversarial training from Zhong \etal, our adaptive threshold maps, or our edge maps) and shadow attack with parameter $k$. While AdaThresh tends to have lower robustness, it has a tighter variance than that of edge map defenses.}
  \label{tab:core}
\end{table*}

\begin{table*}
  \centering
  \begin{tabular}{@{}lccccccccccccc@{}}
    \toprule
    Defense/Dataset & $k=$ 0.20 & 0.25 & 0.30 & 0.35 & 0.40 & \textbf{0.43} & 0.45 & 0.50 & 0.55 & 0.60 & 0.65 & 0.70 \\
    \midrule
    Zhong \etal: AT & 98 & 93 & 112 & 129 & 128 & \textbf{126} & 136 & 155 & 188 & 232 & 249 & 343 \\
    % Zhong \etal: AT-LISA & AA\% & BB\% & CC\% & DD\% & EE\% & \textbf{FF\%} & GG\% & HH\% & II\% & JJ\% & KK\% & LL\% \\
    Ours: AdaThresh & 210 & 251 & 256 & 289 & 348 & \textbf{355} & 331 & 329 & 430 & 501 & 574 & 560 \\
    % Ours: AdaThresh-LISA & AA\% & BB\% & CC\% & DD\% & EE\% & \textbf{FF\%} & GG\% & HH\% & II\% & JJ\% & KK\% & LL\% \\
    Ours: Edges & 304 & 277 & 370 & 376 & 396 & \textbf{456} & 450 & 515 & 580 & 645 & 726 & 819\\
    % Ours: Edges-LISA & AA\% & BB\% & CC\% & DD\% & EE\% & \textbf{FF\%} & GG\% & HH\% & II\% & JJ\% & KK\% & LL\% \\
    \bottomrule
  \end{tabular}
  \caption{The above table describes the average number of queries to the backbone model over $n = 5$ trials for a specified defense (adversarial training from Zhong \etal, our adaptive threshold maps, or our edge maps) and shadow attack with parameter $k$. This is a proxy measure of the ``stealthiness" of the black-box attack. The variance in edge map defenses remains significantly higher than AdaThresh defenses.}
  \label{tab:core2}
\end{table*}

\subsection{Reformulation}
\begin{theorem}
\label{thm:one}
Given an image in RGB color space $x = [R(x), G(x), B(x)]$, its LAB counterpart $x_{\text{lab}} = [L(x), A(x), B(x)]$, and its adversarial counterparts $x_{\text{adv}}$ and $x_{\text{lab, adv}}$, for a shadow attack with parameter $k$, the adversarial perturbation $||x_{\text{adv}} - x|| = \varepsilon_k $ is at most $||M|| \cdot {100|k-1|}$, where $M$ is the matrix multiplication to convert from LAB to RGB space.
\end{theorem}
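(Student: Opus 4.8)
The plan is to trace the shadow perturbation through both color-space representations and then exploit the linearity of the asserted LAB-to-RGB map. First I would write the perturbation out explicitly in LAB space. By the attack definition, a shadowed pixel $(i,j)$ is mapped $x_{i,j} \mapsto x_{i,j} \cdot [\,k\ 1\ 1\,]^\top$ componentwise, so only the lightness channel is rescaled while the $A$ and $B$ channels are left fixed. Hence the per-pixel LAB displacement is $(x_{\text{lab,adv}} - x_{\text{lab}})_{i,j} = [\,L(x)_{i,j}(k-1),\ 0,\ 0\,]^\top$ on $\mathcal{P}_{\mathcal{V}} \cap \mathcal{M}$ and is zero elsewhere; in particular, at most one coordinate per pixel is nonzero.

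Next I would bound this displacement in the working norm, which I take to be the $\ell_\infty$ norm to match the $\ell_\infty$-ball reformulation promised in the introduction (this choice is essential, since an $\ell_2$ bound would scale with the number of shadowed pixels and would not yield a per-image constant). Because the CIELAB lightness channel takes values in $[0,100]$, we have $|L(x)_{i,j}| \le 100$, so every entry of $x_{\text{lab,adv}} - x_{\text{lab}}$ has magnitude at most $100\,|k-1|$, giving $\|x_{\text{lab,adv}} - x_{\text{lab}}\| \le 100\,|k-1|$.

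The final step transfers this estimate to RGB. Treating the LAB-to-RGB conversion as the linear map $M$ named in the hypothesis, both $x = M x_{\text{lab}}$ and $x_{\text{adv}} = M x_{\text{lab,adv}}$ hold, so by linearity $x_{\text{adv}} - x = M(x_{\text{lab,adv}} - x_{\text{lab}})$. Applying submultiplicativity of the induced operator norm then yields
\begin{equation}
  \varepsilon_k = \|x_{\text{adv}} - x\| = \|M(x_{\text{lab,adv}} - x_{\text{lab}})\| \le \|M\|\,\|x_{\text{lab,adv}} - x_{\text{lab}}\| \le \|M\| \cdot 100\,|k-1|,
\end{equation}
which is exactly the claimed bound.

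The main obstacle is the linearity assumption itself: the true CIELAB-to-RGB pipeline is not affine, since it passes through XYZ and applies a nonlinear gamma companding, so the clean identity $x_{\text{adv}} - x = M(x_{\text{lab,adv}} - x_{\text{lab}})$ only holds for the linearized model $M$ stated in the theorem. I would either adopt that linear model as given (as the statement does), or, for a fully rigorous version, reinterpret $\|M\|$ as a Lipschitz constant of the conversion map over the relevant value range and run the same chain of inequalities. A secondary point worth verifying is consistency across pixels: since $M$ acts identically on each pixel's $3$-vector, the block-diagonal operator it induces on the whole image has the same induced $\ell_\infty$ norm as the single $3 \times 3$ block, so the per-pixel estimate extends to the full image without changing the constant.
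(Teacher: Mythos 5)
Your proof is correct and follows the only natural route---isolating the $(k-1)L$ perturbation in the L channel, bounding $|L| \le 100$, and pushing the estimate through the linear(ized) conversion $M$ by submultiplicativity---which is evidently the argument the theorem statement is built around (the paper defers its proof to an appendix not included in the provided source, but the $\|M\| \cdot 100|k-1|$ form presupposes exactly these steps). Your caveats about the nonlinearity of the true CIELAB-to-RGB pipeline and about working in the $\ell_\infty$ norm are well taken; the one detail worth checking against the implementation is the L-channel range, since the paper's filtering criterion elsewhere (mean L no larger than $120$) suggests an OpenCV-style $[0,255]$ scaling of L, under which the constant would be $255|k-1|$ rather than $100|k-1|$.
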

\begin{proof}
See the Appendix.
\end{proof}

\subsection{Robustness Against Other Attacks}

To test our robustness against other attacks, we first obtained a value for $\varepsilon$ when $k = 0.43$ based on \cref{thm:one}.
% \begin{table*}
%   \centering
%   \begin{tabular}{@{}lcccccc@{}}
%     \toprule
%     Defense/Dataset & $k=0.43$ Shadow Attack & FGSM & PGD & Boundary Attack & Example1 & Example2 \\
%     \midrule
%     AdaThresh-GTSRB \\
%     AdaThresh-LISA  \\
%     Edges-GTSRB \\
%     Edges-LISA  \\
%     \bottomrule
%   \end{tabular}
%   \caption{Results.   Ours is better.}
%   \label{tab:others}
% \end{table*}
\section{Discussion and Limitations}
Beyond providing additional information in the form of a fourth channel, this attack also combines gradient masking (as the binary edge or threshold map is non-differentiable) and adversarial training (as it is trained on shadowed images. However, even though it uses gradient masking, the defense remains robust even to non-gradient based attacks like Boundary Attack and shadow attacks. 

There are a few limitations to this attack. First, its reliance on adversarial retraining makes its training process slow. Second, it uses off-the-shelf edge and adaptive thresholding methods; it is worth investigating a more bespoke set of thresholding equations in future research. Third, its robustness is significantly higher than the baseline, but insufficient for critical tasks like sign recognition; our work should be seen as a starting point and not a final defense. 

\section{Conclusion}
In this paper, we presented a novel and simple adversarial defense against shadow-based adversarial attacks. Our defense requires no retuning or redesign of the model architecture to achieve good robustness against shadow attacks. Furthermore, our defense remains robust against classic adversarial attacks including other optics-based adversarial attacks.
\section{Acknowledgments and Funding}
We would like to thank Dr. Gregory Ditzler, David Schwartz, and Huayu Li for their assistance in developing this paper. This paper was funded by National Science Foundation (NSF) Grant IIS-195039.
%%%%%%%%% REFERENCES
{\small
\bibliographystyle{ieee_fullname}
\bibliography{egbib}

\begin{thebibliography}{10}\itemsep=-1pt

\bibitem{balunovic2020adversarial}
Mislav Balunovi{\'c} and Martin Vechev.
\newblock Adversarial training and provable defenses: Bridging the gap.
\newblock In {\em 8th International Conference on Learning Representations
  (ICLR 2020)(virtual)}. International Conference on Learning Representations,
  2020.

\bibitem{bansal2013edges}
Aayush Bansal, Adarsh Kowdle, Devi Parikh, Andrew Gallagher, and Larry Zitnick.
\newblock Which edges matter?
\newblock In {\em Proceedings of the IEEE International Conference on Computer
  Vision Workshops}, pages 578--585, 2013.

\bibitem{bradley2007adaptive}
Derek Bradley and Gerhard Roth.
\newblock Adaptive thresholding using the integral image.
\newblock {\em Journal of graphics tools}, 12(2):13--21, 2007.

\bibitem{canny1986computational}
John Canny.
\newblock A computational approach to edge detection.
\newblock {\em IEEE Transactions on pattern analysis and machine intelligence},
  (6):679--698, 1986.

\bibitem{cosgrove2020adversarial}
Christian Cosgrove and Alan Yuille.
\newblock Adversarial examples for edge detection: They exist, and they
  transfer.
\newblock In {\em Proceedings of the IEEE/CVF Winter Conference on Applications
  of Computer Vision}, pages 1070--1079, 2020.

\bibitem{dalvi2004adversarial}
Nilesh Dalvi, Pedro Domingos, Sumit Sanghai, and Deepak Verma.
\newblock Adversarial classification.
\newblock In {\em Proceedings of the tenth ACM SIGKDD international conference
  on Knowledge discovery and data mining}, pages 99--108, 2004.

\bibitem{de2003traffic}
Arturo De~la Escalera, J~Ma Armingol, and Mario Mata.
\newblock Traffic sign recognition and analysis for intelligent vehicles.
\newblock {\em Image and vision computing}, 21(3):247--258, 2003.

\bibitem{ding2019sensitivity}
Gavin~Weiguang Ding, Kry Yik~Chau Lui, Xiaomeng Jin, Luyu Wang, and Ruitong
  Huang.
\newblock On the sensitivity of adversarial robustness to input data
  distributions.
\newblock In {\em ICLR (Poster)}, 2019.

\bibitem{ding2001canny}
Lijun Ding and Ardeshir Goshtasby.
\newblock On the canny edge detector.
\newblock {\em Pattern recognition}, 34(3):721--725, 2001.

\bibitem{eykholt2018robust}
Kevin Eykholt, Ivan Evtimov, Earlence Fernandes, Bo Li, Amir Rahmati, Chaowei
  Xiao, Atul Prakash, Tadayoshi Kohno, and Dawn Song.
\newblock Robust physical-world attacks on deep learning visual classification.
\newblock In {\em Proceedings of the IEEE conference on computer vision and
  pattern recognition}, pages 1625--1634, 2018.

\bibitem{fleyeh2006shadow}
Hasan Fleyeh.
\newblock Shadow and highlight invariant colour segmentation algorithm for
  traffic signs.
\newblock In {\em 2006 IEEE Conference on Cybernetics and Intelligent Systems},
  pages 1--7. IEEE, 2006.

\bibitem{fleyeh2011eigen}
Hasan Fleyeh and Erfan Davami.
\newblock Eigen-based traffic sign recognition.
\newblock {\em IET Intelligent Transport Systems}, 5(3):190--196, 2011.

\bibitem{geirhos2018imagenet}
Robert Geirhos, Patricia Rubisch, Claudio Michaelis, Matthias Bethge, Felix~A
  Wichmann, and Wieland Brendel.
\newblock Imagenet-trained cnns are biased towards texture; increasing shape
  bias improves accuracy and robustness.
\newblock {\em arXiv preprint arXiv:1811.12231}, 2018.

\bibitem{gaussian}
Kaushik Ghose.
\newblock Isometric plot of a two dimensional gaussian, 2006.

\bibitem{gnanasambandam2021optical}
Abhiram Gnanasambandam, Alex~M Sherman, and Stanley~H Chan.
\newblock Optical adversarial attack.
\newblock In {\em Proceedings of the IEEE/CVF International Conference on
  Computer Vision}, pages 92--101, 2021.

\bibitem{Goldblum_2021}
Micah Goldblum, Avi Schwarzschild, Ankit Patel, and Tom Goldstein.
\newblock Adversarial attacks on machine learning systems for high-frequency
  trading.
\newblock In {\em Proceedings of the Second {ACM} International Conference on
  {AI} in Finance}. {ACM}, nov 2021.

\bibitem{hirano2021universal}
Hokuto Hirano, Akinori Minagi, and Kazuhiro Takemoto.
\newblock Universal adversarial attacks on deep neural networks for medical
  image classification.
\newblock {\em BMC medical imaging}, 21(1):1--13, 2021.

\bibitem{huang2017adversarial}
Sandy Huang, Nicolas Papernot, Ian Goodfellow, Yan Duan, and Pieter Abbeel.
\newblock Adversarial attacks on neural network policies.
\newblock {\em arXiv preprint arXiv:1702.02284}, 2017.

\bibitem{kennedy1995particle}
James Kennedy and Russell Eberhart.
\newblock Particle swarm optimization.
\newblock In {\em Proceedings of ICNN'95-international conference on neural
  networks}, volume~4, pages 1942--1948. IEEE, 1995.

\bibitem{kumar2013algorithm}
Mukesh Kumar, Rohini Saxena, et~al.
\newblock Algorithm and technique on various edge detection: A survey.
\newblock {\em Signal \& Image Processing}, 4(3):65, 2013.

\bibitem{li2015novel}
Haojie Li, Fuming Sun, Lijuan Liu, and Ling Wang.
\newblock A novel traffic sign detection method via color segmentation and
  robust shape matching.
\newblock {\em Neurocomputing}, 169:77--88, 2015.

\bibitem{li2019adversarial}
Juncheng Li, Frank Schmidt, and Zico Kolter.
\newblock Adversarial camera stickers: A physical camera-based attack on deep
  learning systems.
\newblock In {\em International Conference on Machine Learning}, pages
  3896--3904. PMLR, 2019.

\bibitem{mahima2021adversarial}
KT~Yasas Mahima, Mohamed Ayoob, and Guhanathan Poravi.
\newblock Adversarial attacks and defense technologies on autonomous vehicles:
  A review.
\newblock {\em Appl. Comput. Syst.}, 26(2):96--106, 2021.

\bibitem{moosavi2016deepfool}
Seyed-Mohsen Moosavi-Dezfooli, Alhussein Fawzi, and Pascal Frossard.
\newblock Deepfool: a simple and accurate method to fool deep neural networks.
\newblock In {\em Proceedings of the IEEE conference on computer vision and
  pattern recognition}, pages 2574--2582, 2016.

\bibitem{nadernejad2008edge}
Ehsan Nadernejad, Sara Sharifzadeh, and Hamid Hassanpour.
\newblock Edge detection techniques: Evaluations and comparisons.
\newblock {\em Applied Mathematical Sciences}, 2(31):1507--1520, 2008.

\bibitem{uozbulak_pytorch_vis_2022}
Utku Ozbulak.
\newblock Pytorch cnn visualizations.
\newblock \url{https://github.com/utkuozbulak/pytorch-cnn-visualizations},
  2019.

\bibitem{NEURIPS2019_9015}
Adam Paszke, Sam Gross, Francisco Massa, Adam Lerer, James Bradbury, Gregory
  Chanan, Trevor Killeen, Zeming Lin, Natalia Gimelshein, Luca Antiga, Alban
  Desmaison, Andreas Kopf, Edward Yang, Zachary DeVito, Martin Raison, Alykhan
  Tejani, Sasank Chilamkurthy, Benoit Steiner, Lu Fang, Junjie Bai, and Soumith
  Chintala.
\newblock Pytorch: An imperative style, high-performance deep learning library.
\newblock In H. Wallach, H. Larochelle, A. Beygelzimer, F. d\textquotesingle
  Alch\'{e}-Buc, E. Fox, and R. Garnett, editors, {\em Advances in Neural
  Information Processing Systems 32}, pages 8024--8035. Curran Associates,
  Inc., 2019.

\bibitem{sanocki1998edges}
Thomas Sanocki, Kevin~W Bowyer, Michael~D Heath, and Sudeep Sarkar.
\newblock Are edges sufficient for object recognition?
\newblock {\em Journal of Experimental Psychology: Human Perception and
  Performance}, 24(1):340, 1998.

\bibitem{sayles2021invisible}
Athena Sayles, Ashish Hooda, Mohit Gupta, Rahul Chatterjee, and Earlence
  Fernandes.
\newblock Invisible perturbations: Physical adversarial examples exploiting the
  rolling shutter effect.
\newblock In {\em Proceedings of the IEEE/CVF Conference on Computer Vision and
  Pattern Recognition}, pages 14666--14675, 2021.

\bibitem{inevitable_2018}
Ali Shafahi, W.~Ronny Huang, Christoph Studer, Soheil Feizi, and Tom Goldstein.
\newblock Are adversarial examples inevitable?
\newblock 2018.

\bibitem{shrikumar2017learning}
Avanti Shrikumar, Peyton Greenside, and Anshul Kundaje.
\newblock Learning important features through propagating activation
  differences.
\newblock In {\em International conference on machine learning}, pages
  3145--3153. PMLR, 2017.

\bibitem{simonyan2013deep}
Karen Simonyan, Andrea Vedaldi, and Andrew Zisserman.
\newblock Deep inside convolutional networks: Visualising image classification
  models and saliency maps.
\newblock {\em arXiv preprint arXiv:1312.6034}, 2013.

\bibitem{SASPWEB2011}
Julius~O. Smith.
\newblock {\em Spectral Audio Signal Processing}.
\newblock
  \htmladdnormallink{\texttt{http:}}{http://ccrma.stanford.edu/~jos/sasp/}\texttt{//\-ccrma.stanford.edu/\-\~{}jos/\-sasp/},
  accessed <date>.
\newblock online book, 2011 edition.

\bibitem{song2018physical}
Dawn Song, Kevin Eykholt, Ivan Evtimov, Earlence Fernandes, Bo Li, Amir
  Rahmati, Florian Tramer, Atul Prakash, and Tadayoshi Kohno.
\newblock Physical adversarial examples for object detectors.
\newblock In {\em 12th USENIX workshop on offensive technologies (WOOT 18)},
  2018.

\bibitem{gtsrb}
Johannes Stallkamp, Marc Schlipsing, Jan Salmen, and Christian Igel.
\newblock Man vs. computer: Benchmarking machine learning algorithms for
  traffic sign recognition.
\newblock {\em Neural networks}, 32:323--332, 2012.

\bibitem{stephane1999wavelet}
Mallat Stephane.
\newblock A wavelet tour of signal processing, 1999.

\bibitem{sun2021can}
Mingjie Sun, Zichao Li, Chaowei Xiao, Haonan Qiu, Bhavya Kailkhura, Mingyan
  Liu, and Bo Li.
\newblock Can shape structure features improve model robustness under diverse
  adversarial settings?
\newblock In {\em Proceedings of the IEEE/CVF International Conference on
  Computer Vision}, pages 7526--7535, 2021.

\bibitem{szegedy2013intriguing}
Christian Szegedy, Wojciech Zaremba, Ilya Sutskever, Joan Bruna, Dumitru Erhan,
  Ian Goodfellow, and Rob Fergus.
\newblock Intriguing properties of neural networks.
\newblock {\em arXiv preprint arXiv:1312.6199}, 2013.

\bibitem{vicente2016large}
Tom{\'a}s F~Yago Vicente, Le Hou, Chen-Ping Yu, Minh Hoai, and Dimitris
  Samaras.
\newblock Large-scale training of shadow detectors with noisily-annotated
  shadow examples.
\newblock In {\em European Conference on Computer Vision}, pages 816--832.
  Springer, 2016.

\bibitem{white1983image}
James~M White and Gene~D Rohrer.
\newblock Image thresholding for optical character recognition and other
  applications requiring character image extraction.
\newblock {\em IBM Journal of research and development}, 27(4):400--411, 1983.

\bibitem{zhong2022shadows}
Yiqi Zhong, Xianming Liu, Deming Zhai, Junjun Jiang, and Xiangyang Ji.
\newblock Shadows can be dangerous: Stealthy and effective physical-world
  adversarial attack by natural phenomenon.
\newblock In {\em Proceedings of the IEEE/CVF Conference on Computer Vision and
  Pattern Recognition}, pages 15345--15354, 2022.

\end{thebibliography}
}

\end{document}